\newcommand{\norm}[1]{\left\Vert#1\right\Vert}
\newcommand{\trace}{\mathrm{tr}}
\newcommand{\Ebb}{\mathbb{E}}
\begin{document}
\title{Disturbance Grassmann Kernels for Subspace-Based Learning}

\author{Junyuan Hong}
\affiliation{%
  \institution{University of Science and Technology of China}
  \city{Hefei}
  \state{China}
  \postcode{230027}
}
\email{jyhong@mail.ustc.edu.cn}

\author{Huanhuan Chen}
\authornote{Corresponding author}
\affiliation{%
  \institution{University of Science and Technology of China}
  \city{Hefei}
  \state{China}
  \postcode{230027}
}
\email{hchen@ustc.edu.cn}

\author{Feng Lin}
\affiliation{%
  \institution{University of Science and Technology of China}
  \city{Hefei}
  \state{China}
  \postcode{230027}
}
\email{lf1995@mail.ustc.edu.cn}

\renewcommand{\shortauthors}{J. Hong et al.}

\begin{abstract}
In this paper, we focus on subspace-based learning problems, where data elements are linear subspaces instead of vectors.
To handle this kind of data, Grassmann kernels were proposed to measure the space structure and used with classifiers, e.g., Support Vector Machines (SVMs).
However, the existing discriminative algorithms mostly ignore the instability of subspaces, which would cause the classifiers to be misled by disturbed instances.
Thus we propose considering all potential disturbances of subspaces in learning processes to obtain more robust classifiers.
Firstly, we derive the dual optimization of linear classifiers with disturbances subject to a known distribution, resulting in a new kernel, Disturbance Grassmann (DG) kernel.
Secondly, we research into two kinds of disturbance, relevant to the subspace matrix and singular values of bases, with which we extend the Projection kernel on Grassmann manifolds to two new kernels.
Experiments on action data indicate that the proposed kernels perform better compared to state-of-the-art subspace-based methods, even in a worse environment.
\end{abstract}

\copyrightyear{2018} 
\acmYear{2018} 
\setcopyright{acmlicensed}
\acmConference[KDD '18]{The 24th ACM SIGKDD International Conference on Knowledge Discovery \& Data Mining}{August 19--23, 2018}{London, United Kingdom}
\acmBooktitle{KDD '18: The 24th ACM SIGKDD International Conference on Knowledge Discovery \& Data Mining, August 19--23, 2018, London, United Kingdom}
\acmPrice{15.00}
\acmDOI{10.1145/3219819.3219959}
\acmISBN{978-1-4503-5552-0/18/08}
\begin{CCSXML}
<ccs2012>
	<concept>
		<concept_id>10010147.10010257.10010258.10010259.10010263</concept_id>
		<concept_desc>Computing methodologies~Supervised learning by classification</concept_desc>
		<concept_significance>500</concept_significance>
	</concept>
	<concept>
		<concept_id>10010147.10010257.10010293.10010075</concept_id>
		<concept_desc>Computing methodologies~Kernel methods</concept_desc>
		<concept_significance>500</concept_significance>
	</concept>
</ccs2012>
\end{CCSXML}

\ccsdesc[500]{Computing methodologies~Supervised learning by classification}
\ccsdesc[500]{Computing methodologies~Kernel methods}

\keywords{Subspace; Grassmann manifolds; Kernel; Noise; Classification; Supervised learning} %

\maketitle

\section{Introduction}

Approximately representing data by linear subspaces are widely applied in computer vision, such as, dynamic texture classification \cite{doretto2003dynamic}, video-based face recognition \cite{aggarwal2004system}, shape sequence analysis \cite{veeraraghavan2005matching}, action recognition \cite{data:MSRAct:slama2015accurate}, etc.
The popularity could be attributed to the excellence of the subspace in representing a datum consisting of numerous vectors rather than a single one.
For instance, a subspace can represent an action video by approximating the principal postures in bases \cite{data:MSRAct:slama2015accurate}.
For comparing such high-dimensional data, subspace representation is also efficient, due to the small size of subspace matrixes.
Moreover, the annoying minor perturbation in data can be mitigated by constructing subspaces \cite{turk1991eigenfaces,stewart1990perturbation}.

We are interested in subspace-based learning which is an approach to represent the data as a set of subspaces instead of vectors. 
In order to study the similarity/dissimilarity between subspaces, the learning problems are formulated on the Grassmann manifold, a set of fixed-dimensional subspaces embedded in a Euclidean space \cite{hamm2008grassmann}.
Unfortunately, because of its nonlinear structure, traditional learning techniques for Euclidean space are not directly feasible for the Grassmann manifold.
To tackle this issue, Grassmann kernels, e.g., the Binet-Cauchy kernel \cite{smola2005binet} and the Projection kernel \cite{edelman1998geometry}, are proposed and have achieved promising performance in discriminative learning \cite{hamm2008grassmann}. 
Particularly, the Projection kernel maps subspaces to projection matrixes lying in an isometric flattened Euclidean space.
Its Euclidean properties make traditional learning methods easily adapt to subspaces, e.g., discriminant analysis \cite{hamm2008grassmann,harandi2011graph}, dictionary learning \cite{harandi2013dictionary} and metric learning \cite{huang2015projection}.

Though discriminative learning on Grassmann manifolds has been widely studied, the stability of subspace representation is seldom discussed in the literature.
In implementations, subspaces are approximately constructed by Singular-Value Decomposition (SVD) of designed data matrixes or sets of vectors.
When noise appears in data, subspaces are prone to be rotated in unexpected directions.
Plus, the bases of subspace representation are practically selected by singular values, which may alter when signal amplitude fluctuates.
As a result, informative bases might be changed or missed from representation, if noise signals are overly significant.
For small intra-class differences between subspace instances, such disturbances will mislead classifiers to a great extent.

\begin{figure}[!ht]
    \centering
    \includegraphics[width=0.9\columnwidth]{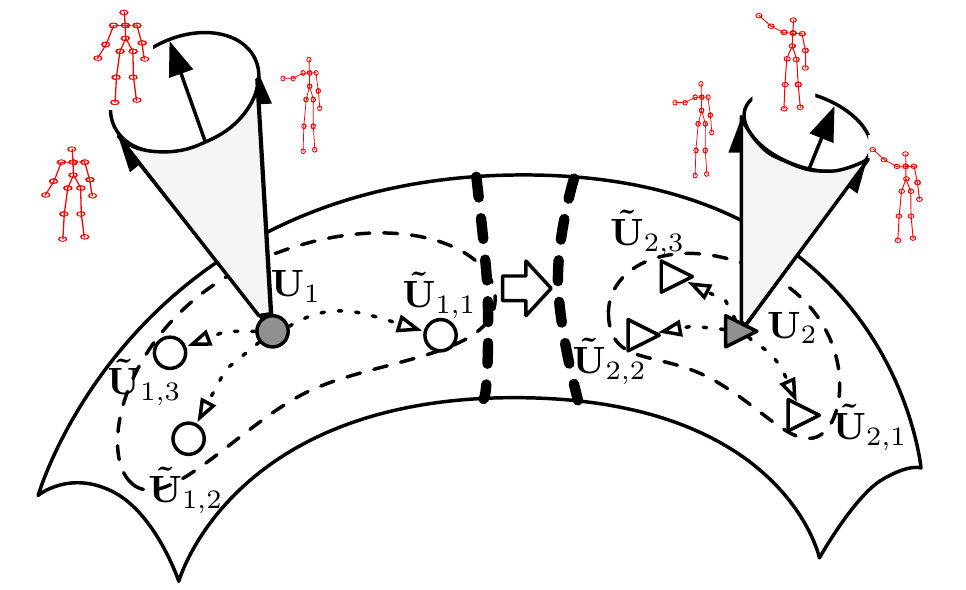}
    \caption{Illustration of learning with disturbances on the Grassmann manifold. Two gray points are subspace representations of action videos in which bases approximate the principal postures of an action. Disturbed subspaces (white points) are generated from them on the nonlinear manifold. The classification boundary (the thick dash line) is `pushed' away from $\mathbf{U}_1$ because the disturbance of $\mathbf{U}_1$ is prone to be more significant.}
    \label{fig:dg_manifold}
\end{figure}

Inspired by implicit data noising scheme \cite{maaten2013learning}, we investigate to take the disturbances into account in subspace-based discriminative learning.
We depict the disturbances on the manifold in \cref{fig:dg_manifold}.
For each instance, a large number of disturbed duplicates are generated subject to specific distributions on the nonlinear manifold and the learning objective function is modified by averaging over these duplicates.
Through dual optimization of the objective, the expectation approximation of the average yields a new kernel, i.e., Disturbance Grassmann kernel.

Specifically, two types of disturbance are discussed.
First, we consider a pseudo-Gaussian distribution on the Grassmann manifold.
The variance varies on bases since bases of larger singular values tend to be stabler than others with respect to noise.
Second, a Dirichlet distribution is introduced to characterize the probability of missing bases.
The scaled Projection kernels \cite{hamm2008grassmann} turns out to be a special case of the Dirichlet disturbance.
Furthermore, we analyze the noise modeled by subspaces, bridging the connection between subspace disturbances and data noise.
It indicates that our proposed kernels properly encode the prior knowledge of real noise.

We demonstrate the disturbance Grassmann kernels with the Support Vector Machines on action data.
The classifiers with proposed kernels suggest a better performance in comparison to state-of-the-art methods.
Also, advanced subspace-based discriminative algorithms are compared.
In challenging environments, e.g., recognition involving noise action or in low latency, the proposed kernels show their superior robustness.

\section{Related Work}
In these decades, there has been a large body of work concerning Grassmann kernels and learning with noise.
This section reviews related work in the context of the Projection kernel and implicitly noising, which are the main base of the DG kernel.

Thanks to the success of the Projection kernel in subspaces, researchers have paid attention to its extension to adapt to more complicated cases.
From the probabilistic interpretation of the kernel, Hamm and Lee extend it to affine subspaces and scaled ones \cite{hamm2008grassmann}, which yields new kernels.
Ways assembling multiple Projection kernels are proposed, to handle tensor data \cite{lui2010action,lui2012tangent} or data with multiple varying factors \cite{wang2016product}.
But these kernels do not exactly consider the disturbances of subspaces or the noise of data.
One related work is learning a low-rank representation on Grassmann manifolds \cite{wang2014low,wang2016product}, which is robust to minor non-informative disturbances.
However, it fails to take account of the specific characteristics of subspace disturbances caused by data, which could be pretty large.
Another try is perturbing data by affine transformations to help to explore the geometric structure of manifolds \cite{lui2008grassregist}.
It is accompanied by drawbacks that the artificial perturbation relies on specific data while the subspace representation is capable for a wider range of data, and the sampling of perturbation brings high computation burdens.
A better way is to consider a subspace-level noise and learn models by implicitly noising.

Different from methods which improves the quality of data \cite{wu2013novel}, noising scheme can be regarded as a kind of regularization \cite{wager2013dropoutreg} and has proven to be effective for training robust classifiers \cite{chen2014dropout,liu2014state} %
 or extracting better features \cite{chen2015marginalizing,li2016learningmcflabel}.
Maaten et al. firstly propose the implicitly noising scheme for general learning problems \cite{maaten2013learning}.
The method, learning with Marginalized Corrupted Features (MCF), utilizes the expectation to approximate the average of the objective function over corrupted features.
But it applies a uniform configuration of noise parameters for each instance neglecting the side information of data.
Researchers, therefore, exploit instance-specific distributions to improve the method \cite{zhuo2015adaptive,ye2017learningmah}.
The technique is also introduced in metric learning, for that noise may alter relationships of points \cite{qian2014distancedropout,ye2017learningmah}.
Nevertheless, their attempts focus on linear spaces.
Different from them, we leverage the idea of the linear MCF to improve Grassmann classifiers with the help of implicit disturbances on subspaces.
Rather than learning individual distributions from data, we determine the disturbance parameter according to the connection between subspace representation and data.
It is more efficient and simpler and encloses information of subspace representation meanwhile. %
Inspired by the model-based kernel for efficient time series classification \cite{chen2013model,chen2014learning}, we treat a subspace as a model of data, then implement noising scheme through extending the kernel of models, i.e., the Projection kernel.

\section{Preliminaries}
\label{sec:back}
Before presenting the Disturbance Grassmann kernel, we would like to give a brief summary of basic knowledge of the Grassmann manifold. %

A Grassmann manifold $\mathcal{G}(m, D)$ is a set of $m$-dimensional linear subspace of $\mathbb{R}^D$ for $0\le m \le D$ and is a compact Riemannian manifold of $m(D-m)$ dimension \cite{absil2009optimization}. 
An element of Grassmann manifolds can be expressed by a basis matrix $\mathbf{U} = [\mathbf{u}_1,\mathbf{u}_2,\dots,\mathbf{u}_m] \in \mathbb{R}^{D\times m}$ which spans a subspace as
$ \mathrm{span}(\mathbf{U}) = \{ \mathbf{x} | \mathbf{x}=\mathbf{U}\mathbf{v}, \forall \mathbf{v}\in \mathbb{R}^m \}. $
The basis matrix satisfies that $\mathbf{U}^T\mathbf{U} = \mathbf{I}_m$, where $\mathbf{I}_m$ is the identity matrix of size $m\times m$.
The equivalence relation between two elements, $\mathbf{U}$ and $\mathbf{U}'$, are defined such that there exists a $m \times m$ orthogonal matrix $\mathbf{Q}_m$ such that 
\begin{align}
\mathbf{U}' = \mathbf{U}\mathbf{Q}_m. \label{eq:equvi_grassmann}
\end{align}
As a result, the basis matrix is rotation-invariant to any $\mathbf{Q}_m$.
For discriminative learning, some distance measurements between elements on the Grassmann manifold have been discussed \cite{hamm2008grassmann}.
Here, we specifically consider the metric resulting from projection mapping, which embeds the Grassmann manifold into the space of symmetric matrices.

The projection mapping is defined as \cref{eq:proj_map} and the corresponding Projection kernel function is \cref{eq:projection_kernel}.
\begin{align}
\phi:~\mathcal{G}(m, D) \rightarrow \mathrm{Sym}(D),~\phi(\mathbf{U})=\mathbf{UU}^T \label{eq:proj_map} \\
\kappa_P(\mathbf{U}_1, \mathbf{U}_2) = \trace [(\mathbf{U}_1\mathbf{U}_1^T)(\mathbf{U}_2\mathbf{U}_2^T)] \label{eq:projection_kernel} %
\end{align}
where $\mathrm{Sym}(D)$ is a space consisting of of $D$-by-$D$ symmetric matrixes and $\trace(\cdot)$ denotes the trace operation.
Each point on the Grassmann manifold is uniquely mapped to one projection matrix \cite{helmke2007newton} and the notion of distances between them are well-preserved meanwhile \cite{chikuse2012statistics}.
The resultant projection metric is simply Euclidean in $\mathbb{R}^{D\times D}$ which approximate the true Grassmann geodesic distance up to a scale of $\sqrt{2}$ \cite{harandi2013dictionary}.
Thus it is easy for extending the Euclidean methods to the Grassmann manifold and gets popular in the research community.
For more information about the geometry of the Grassmann manifold, please refer to \cite{edelman1998geometry}.

\section{Disturbance Grassmann Kernels}

In this section, we first introduce learning problems with disturbed data, and then derive the Disturbance Grassmann kernels from the dual optimization of the learning objective function.
Based on the formulation, we extend the Projection kernel to subspaces with potential disturbances.

\subsection{Learning with Disturbed Subspaces}

Suppose a set of observation-label pair set $\mathcal{D} = \{(\mathbf{U}_n, y_n)\}_{n=1}^N$, where $\mathbf{U}$ is an observation on the Grassmann manifolds instead of data space.
Our goal is to predict class label $y$ given a new value of $\mathbf{U}$. %
Following the derivation of the MCF methods \cite{maaten2013learning}, we augment the data set with disturbances as $\mathcal{\tilde D} = \{\{(\mathbf{\tilde U}_{nk}, y_n)\}_{k=1}^K\}_{n=1}^N$ and average their loss over disturbed samples.
Each disturbed subspace, $\mathbf{\tilde U}_{nk}$, is drawn from a conditional distribution $P(\mathbf{\tilde U}| \mathbf{U}_{n})$ independently.
Let $\ell(\cdot)$ be the instance-specific loss function. 
The learning objective function can be written as a regularized empirical loss function, i.e.,
\begin{align}
L(\mathbf{w}) = C \sum_{n=1}^N {1\over K} \sum_{k=1}^K \ell(\mathbf{w}; (\mathbf{\tilde U}_{nk}, y_n)) + {1\over 2} \norm{\mathbf{w}}^2 \label{eq:primal_problem}
\end{align}
where $\mathbf{w}$ is the model parameter vector and $C$ is regularization parameter.
Consider $\ell$ to be logistic $(1+\exp(-y\mathbf{w}^T\phi(\mathbf{U})))$ or hinge loss $\max(0,1-y\mathbf{w}^T\phi(\mathbf{U}))$.
Minimizing the objective will lead to maximizing margins between classes.

Here, we do not optimize the objective function directly, but its dual problems instead.
There are some reasons for doing so.
First, the noising mechanism on the primal problems will be ineffective for nonlinear manifolds.
On the primal problems, data points are treated as vectors in the Euclidean space.
However, for nonlinear manifolds, points are not distributed uniformly in the embedded space.
Feature-based corruption may generate meaningless points away from the manifold or the corruption distribution is improper for the manifold.
For example, randomly altering entries of subspace matrixes would violate orthogonal constraint.

The second consideration is about the efficiency.
Subspace data are usually of high dimension, whose projection matrix will be larger as a result.
Computing the mapping in \cref{eq:proj_map} will be of $\mathcal{O}(mD^2)$ computational complexity.
In contrast, the kernel function in \cref{eq:projection_kernel} has as low computation cost as $\mathcal{O}(m^2D)$ if $\mathbf{U}_1^T \mathbf{U}_2$ is calculated at first\footnote{In many cases, the subspace dimension $m$ is much smaller than the data dimension $D$.}.
It is notable that the benefit comes up only when the number of training points is less than the data dimension.

\subsection{Dual Optimization and New Kernels}

By introducing a new variable, $\boldsymbol{\alpha}$, the dual optimization of \cref{eq:primal_problem} can be formulated as minimizing
\begin{align}
L^*(\boldsymbol{\alpha}) &= {1\over 2} \sum_{n,n'=1}^{N} {1\over K^2} \sum_{k,k'=1}^K \alpha_{nk}\alpha_{n'k'} \kappa(\mathbf{\tilde U}_{nk}, \mathbf{\tilde U}_{n'k'}) \notag \\
&\qquad - \sum_{n=1}^{N} {1\over K} \sum_{k=1}^K h(\alpha_{nk}) \label{eq:stat_dual_loss} \\
&\mathrm{s.t.} ~ g(\boldsymbol{\alpha}) = 0; ~ 0 \le \alpha_{nk} \le C ,~\forall n, k \notag
\end{align}
where $\kappa(\cdot)$ denotes a Grassmann kernel, $h(\alpha)$ the penalty term and  $g(\boldsymbol{\alpha}) = 0$ the equation constraint.
The penalty is $h(\alpha) = \alpha$ for SVMs, while $h(\alpha) = \alpha \log(\alpha) + (C-\alpha) \log(C - \alpha)$ for logistic regressions \cite{yu2011dual}.
The equation constraint is $\sum_{n=1}^{N} {1\over M} \sum_{k=1}^K y_{n} \alpha_{nk} = 0$ for SVMs and can be ignored for logistic regressions.

To avoid the number of coefficients $\alpha$ growing explosively when more disturbed samples are added, we let $\mathbf{\tilde U}_{nk}$ for any $k$ share the same coefficient $\alpha_n$.
Subsequently, the number of disturbed samples can be increased to infinity, and the objective function is, therefore, approximated by the expectation form as
\begin{align}
\mathcal{L}^*(\boldsymbol{\alpha}) &= {1\over 2} \sum_{n, n'=1}^{N} \alpha_{n}\alpha_{n'} \Ebb_{\mathbf{\tilde U}_{n},\mathbf{\tilde U}_{n'}}[ \kappa(\mathbf{\tilde U}_{n}, \mathbf{\tilde U}_{n'}) ] - \sum_{n=1}^{N} h(\alpha_{n}) \label{eq:dual_mcf} \\
&\mathrm{s.t.} ~ g(\boldsymbol{\alpha}) = 0; ~ 0 \le \alpha_{n} \le C ,~\forall n \notag
\end{align}
where we omit the conditional dependence of expectation on $\mathbf{U}_{n}, \mathbf{ U}_{n'}$ for brief.
It is remarkable that we distinguish $\mathbf{\tilde U}_{n}$ and $\mathbf{\tilde U}_{n'}$ as two different variables even if $n$ equals $n'$ when computing the expectation.
It can be proved from \cref{eq:stat_dual_loss} where they are averaged separately.

We notice the objective function, \cref{eq:dual_mcf}, is barely different from the non-noised version except for the kernel.
Hence, we extract the resultant kernel in \cref{eq:dual_mcf} as a new kernel, \emph{Disturbance Grassmann kernel}, i.e.,
\begin{align}
\kappa_{\mathrm{DG}}(\mathbf{U}_{n}, \mathbf{U}_{n'}) = \Ebb_{\mathbf{\tilde U}_{n},\mathbf{\tilde U}_{n'}|\mathbf{U}_{n}, \mathbf{U}_{n'}}[ \kappa(\mathbf{\tilde U}_{n}, \mathbf{\tilde U}_{n'}) ]. \label{eq:dg_kernel_original}
\end{align}
The tractability of \cref{eq:dg_kernel_original} depends on the form of the Grassmann kernel and the distribution.
With the DG kernels, each new subspace $\mathbf{U}$ is classified using SVM or logistic regressions.

\subsection{Extensions to the Projection Kernel}
Because of the simple form of the Projection kernel, the DG kernel is tractable and can be computed in an efficient way.
Assume disturbed subspaces subject to a distribution $P(\mathbf{\tilde U}| \mathbf{U}_{n})$ defined on the Grassmann manifold. %
Substitute the Projection kernel, \cref{eq:projection_kernel}, into the DG kernel formula in \cref{eq:dg_kernel_original}, approaching
\begin{align*}
\trace \left(\Ebb_{\mathbf{\tilde U}_{n}|\mathbf{U}_{n}}[ \mathbf{\tilde U}_n \mathbf{\tilde U}_n^T ] \Ebb_{\mathbf{\tilde U}_{n'}|\mathbf{U}_{n'}}[ \mathbf{\tilde U}_{n'} \mathbf{\tilde U}_{n'}^T ] \right). %
\end{align*}
Particularly, the definition also holds for $n'=n$, because $\mathbf{\tilde U}_{n'}$ and $\mathbf{\tilde U}_{n}$ are distinguished as two variables conditionally independent when $\mathbf{U}_{n}$ is determined. 
With the decomposition $\mathbf{\tilde U} \mathbf{\tilde U}^T = \sum_{l=1}^m \mathbf{\tilde u}_l \mathbf{\tilde u}_l^T$, the DG kernel extending the Projection kernel is given by
\begin{align}
\kappa_{\mathrm{DG}}(\mathbf{U}_{n}, \mathbf{U}_{n'}) &= \sum_{l=1}^m \sum_{s=1}^m \trace \left( \Ebb[ \mathbf{\tilde u}_l^n {\mathbf{\tilde u}_l^{nT}} ] \Ebb[ \mathbf{\tilde u}^{n'}_s \mathbf{\tilde u}_s^{n'T} ] \right). \label{eq:dg_kernel} %
\end{align}
In the next section, we will manifest that such expectation operation will not increase computation cost when proper distributions, $P(\mathbf{\tilde U}| \mathbf{U}_{n})$, are introduced.
For conciseness, we will refer the DG kernel as the extension of the Projection kernel, if not specified.

\section{Subspace Disturbance}
\label{sec:subspace_dist}

In practical implementations, a truncated subspace is used for representation rather than a full-rank one.
Assuming the full-rank subspace to be $\mathbf{U}_f = [\mathbf{u}_1, \cdots, \mathbf{u}_r]$ with normalized singular values\footnote{The normalization is practically meaningful for avoiding the influence of overall amplitude of data. For example, to compare two image sets, the difference of lightness is non-discriminative and can be eliminated by the normalization.} $\lambda_l$ for $l=1,\cdots,r$, the truncated version lying on $\mathcal{G}(m, D)$ can be written as 
\begin{align}
\mathbf{U} &= [\beta_1 \mathbf{u}_1, \cdots, \beta_r \mathbf{u}_r] \label{eq:U_beta} \\
\beta_l &= \mathbb{I}(\lambda_l > \lambda_m ), ~\forall l\in \{1,\cdots, r\} \label{eq:beta}
\end{align}
where $\mathbb{I}$ is the indicating function and $\lambda_m$ is the threshold singular value.
Specifically, we consider two kinds of disturbance:
\begin{enumerate}
    \item The disturbance of basis matrix $[\mathbf{u}_1, \dots, \mathbf{u}_r]$, which should follow a proper distribution on the Grassmann manifold;
    \item The fluctuation of $\lambda_l$, which leads to dropout noise of $\beta_l$.
\end{enumerate}
We will show the disturbance can imply analytical formula of the DG kernel.

\subsection{Pseudo-Gaussian Disturbance} %

There are two ways to define parametric Probability Density Functions (PDFs) on Grassmann manifolds.
One is to define models in the embedded high-dimensional Euclidean space.
However, the integral over a restricted nonlinear manifold is usually intractable.
For example, the Grassmann manifolds will lead to a spherical integral which has no simple solution.
The other is intrinsically restricting the models on the manifold without relying on any Euclidean embedding.
The general idea is to define a PDF in the tangent space of the manifold and `wrap' it back onto the manifold.
Because the tangent space is a vector space, it is possible to draw upon the pool of classical statistic methods.
For such potential, we pursue a model in this way.

Actually, people can define arbitrary PDFs in the tangent space and wrap it back to the Grassmann manifold via an exponential mapping \cite{edelman1998geometry}.
To make the PDF effective, it is necessary to restrict the domain of density functions such that the mapping is a bijection.
Previously, the idea has been applied for statistic estimation on the Grassmann manifold by truncating beyond a radius of $\pi$ \cite{turaga2011statistical}.

A straightforward thought is to extend the multivariate Gaussian distribution to tangent vectors.
The tangent vector of a Grassmann point $\mathbf{U}$ is
\begin{align}
\mathbf{H} = \mathbf{U}_{\perp} \mathbf{Z} %
\end{align}
where $\mathbf{Z}\in \mathbb{R}^{(D-m)\times m}$ and $\mathbf{U}_{\perp}$ is a $D$-by-$(D-m)$ matrix spanning the null space of $\mathbf{U}$.
The exponential mapping from the tangent space to the manifold is\footnote{The mapping is derived from the exponential geodesics on the Grassmann \cite{edelman1998geometry}, thus we use the name, exponential mapping, and denote the mapping as $\mathrm{Exp}(\cdot)$.}
\begin{align} 
\mathbf{\tilde U} = \mathrm{Exp}(\mathbf{H}) &= (\begin{array}{*{20}{c}}
  \mathbf{UV}_H & \mathbf{U}_H
\end{array}) \left(\begin{array}{*{20}{c}}
  \cos \boldsymbol{\Sigma}_H \\ 
  \sin \boldsymbol{\Sigma}_H
\end{array}\right) \mathbf{V}_H^T \label{eq:pi_U_H} \\
&\mathrm{s.t.}~\mathbf{H} = \mathbf{U}_H \boldsymbol{\Sigma}_H \mathbf{V}_H^T \label{eq:tangent_vec_svd}
\end{align}
where \cref{eq:tangent_vec_svd} is the compact SVD of the tangent vector.
Supposing the PDF of a truncated Gaussian distribution is $f(\mathbf{H})$, the expectation of a projection matrix is
\begin{align*}
\Ebb[\mathbf{\tilde U}\mathbf{\tilde U}^T] = \int \mathrm{Exp}(\mathbf{H}) f(\mathbf{H}) d\mathbf{H}
\end{align*}
which is unfortunately intractable because the matrix exponential mapping, \cref{eq:pi_U_H}, is not elementary.

Besides, the mapping has a fatal drawback that it cannot preserve the density as hoped.
A simple trial is wrapping the Gaussian distribution onto the sphere in $\mathbb{R}^3$.
Samples around the pole opposite to the mean point will be denser than expected because of the wrapping operation.
In addition, the truncated Gaussian distribution has no analytical form because the normalization factor cannot be expressed in terms of simple functions.

As far as the DG kernel is concerned, the expectation merely involves individual bases in \cref{eq:dg_kernel}.
Accordingly, it is unnecessary to apply a density explicitly.
Instead, we utilize an implicit density resembling a spherical Gaussian distribution on the Grassmann manifold which is uniform in all directions.
We observe that subspace matrixes can be equivalently expressed in a more natural way coordinated by direction vectors and radius in the tangent space. %
The conclusion is summarized in \cref{th:polar_coord} on the base of \cref{th:cos_sin_decompos,th:quiv_tangent_vec}. %

\begin{lemma}
\label{th:cos_sin_decompos}
(Tangent expression)
Given two subspace denoted by $\mathbf{U}$ and $\mathbf{\tilde U}$ on $ \mathcal{G}(m,D)$, there exists a $m$-by-$m$ orthogonal matrix $\mathbf{\tilde Q}_m$ such that 
\begin{align}
\mathbf{\tilde U}\mathbf{\tilde Q}_m &= \mathbf{U}\cos \boldsymbol{\Theta} + \mathbf{\hat H} \sin \boldsymbol{\Theta} \label{eq:cos_sin_decompos}
\end{align}
where $\boldsymbol{\Theta}$ is a $m$-by-$m$ positive diagonal matrix describing tangent distance in entries, and $\mathbf{\hat H}$ a $D$-by-$m$ tangent vector with orthonormal columns. %
\end{lemma}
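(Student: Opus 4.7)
The plan is to construct the required decomposition by taking the singular value decomposition of the cross-Gram matrix $\mathbf{U}^T\mathbf{\tilde U}$, whose singular values are precisely the cosines of the principal angles between the two subspaces (Jordan's theorem). Write $\mathbf{U}^T\mathbf{\tilde U} = \mathbf{P}\cos\boldsymbol{\Theta}\,\mathbf{Q}^T$ with orthogonal $\mathbf{P}, \mathbf{Q}$ and a diagonal $\cos\boldsymbol{\Theta}$ with entries in $[0,1]$. I would set $\mathbf{\tilde Q}_m := \mathbf{Q}$ and invoke the Grassmann equivalence in \cref{eq:equvi_grassmann} to replace the representative $\mathbf{U}$ by $\mathbf{U}\mathbf{P}$, which spans the same subspace; after these rotations $(\mathbf{U}\mathbf{P})^T(\mathbf{\tilde U}\mathbf{Q}) = \cos\boldsymbol{\Theta}$, so the component of $\mathbf{\tilde U}\mathbf{\tilde Q}_m$ inside $\mathrm{span}(\mathbf{U})$ is diagonal with entries $\cos\theta_l$.

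Next I would split $\mathbf{\tilde U}\mathbf{\tilde Q}_m$ into its projection onto $\mathrm{span}(\mathbf{U})$ and its orthogonal residual,
\begin{align*}
\mathbf{\tilde U}\mathbf{\tilde Q}_m = (\mathbf{U}\mathbf{P})\cos\boldsymbol{\Theta} + \mathbf{R}, \qquad \mathbf{R} := (\mathbf{I}_D - \mathbf{U}\mathbf{U}^T)\mathbf{\tilde U}\mathbf{Q}.
\end{align*}
A short calculation using $\mathbf{\tilde U}^T\mathbf{\tilde U} = \mathbf{I}_m$ and the SVD identity gives $\mathbf{R}^T\mathbf{R} = \mathbf{I}_m - \cos^2\boldsymbol{\Theta} = \sin^2\boldsymbol{\Theta}$, so the columns of $\mathbf{R}$ are pairwise orthogonal with norms $\sin\theta_l$, while $\mathbf{U}^T\mathbf{R} = \mathbf{0}$ places $\mathbf{R}$ in the tangent directions at $\mathbf{U}$. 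Setting $\mathbf{\hat H} := \mathbf{R}\sin^{-1}\boldsymbol{\Theta}$ on columns where $\sin\theta_l > 0$ yields a $D\times m$ matrix with orthonormal columns orthogonal to $\mathbf{U}$, so that $\mathbf{R} = \mathbf{\hat H}\sin\boldsymbol{\Theta}$ and the claimed identity follows upon identifying $\mathbf{U}$ with its Grassmann-equivalent $\mathbf{U}\mathbf{P}$.

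The main obstacle I anticipate is handling vanishing principal angles $\theta_l = 0$: the corresponding columns of $\mathbf{R}$ are zero, so the formal normalization $\mathbf{R}\sin^{-1}\boldsymbol{\Theta}$ is ill-defined on those indices. Because these columns get multiplied by $\sin\theta_l = 0$ in the target identity they do not affect the equation, but to give $\mathbf{\hat H}$ truly orthonormal columns one must fill them in with arbitrary unit vectors of $\mathrm{span}(\mathbf{U})^\perp$ chosen orthogonal to the already defined columns of $\mathbf{\hat H}$; existence of such a completion requires $D - m \geq m$, a mild dimension condition that should either be assumed or the claim restricted to the nondegenerate block. A secondary subtlety worth recording is that the identity silently uses the representative $\mathbf{U}\mathbf{P}$ rather than the original $\mathbf{U}$ on the right-hand side; this is licensed by \cref{eq:equvi_grassmann}, and it makes precise the rotational freedom on both sides of the equation.
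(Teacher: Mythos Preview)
Your argument is correct and self-contained, but it is not the route the paper takes. The paper's proof (sketched jointly for \cref{th:cos_sin_decompos,th:quiv_tangent_vec}) works through the exponential mapping \cref{eq:pi_U_H,eq:tangent_vec_svd}: since the Grassmannian is geodesically connected, any $\mathbf{\tilde U}$ is $\mathrm{Exp}(\mathbf{H})$ for some tangent vector $\mathbf{H}=\mathbf{U}_H\boldsymbol{\Sigma}_H\mathbf{V}_H^T$ at $\mathbf{U}$; right-multiplying the exponential formula by $\mathbf{V}_H$ and using the equivalence \cref{eq:equvi_grassmann} to absorb $\mathbf{V}_H$ into the representative of $\mathbf{U}$ gives the claimed form with $\mathbf{\tilde Q}_m=\mathbf{V}_H$, $\boldsymbol{\Theta}=\boldsymbol{\Sigma}_H$, $\mathbf{\hat H}=\mathbf{U}_H$. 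Your approach instead builds the decomposition directly from the SVD of $\mathbf{U}^T\mathbf{\tilde U}$ (the principal-angle/CS decomposition) and an explicit orthogonal splitting of $\mathbf{\tilde U}\mathbf{Q}$. The two routes meet in the middle---your $\boldsymbol{\Theta}$ of principal angles is exactly the $\boldsymbol{\Sigma}_H$ of the paper's geodesic---but your argument is more elementary (pure linear algebra, no appeal to geodesic surjectivity), while the paper's is shorter once the exponential-map formula is granted. Your careful handling of the degenerate $\theta_l=0$ columns and the dimension condition $D\ge 2m$ for completing $\mathbf{\hat H}$ is a point the paper's sketch leaves implicit.
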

\begin{lemma}
\label{th:quiv_tangent_vec}
Any tangent vector on the Grassmann manifold has an equivalent expression as $\mathbf{\hat H} \boldsymbol{\Theta}$. %
\end{lemma}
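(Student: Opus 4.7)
The plan is to reduce the statement to an application of the compact SVD, exploiting the same right-multiplication ambiguity that defines points on $\mathcal{G}(m,D)$. Concretely, I would start from the ambient description of a tangent vector at $\mathbf{U}$, namely $\mathbf{H} = \mathbf{U}_\perp \mathbf{Z}$ with $\mathbf{Z}\in\mathbb{R}^{(D-m)\times m}$, and then take the compact SVD $\mathbf{Z} = \mathbf{U}_Z \boldsymbol{\Sigma}_Z \mathbf{V}_Z^T$ in which $\mathbf{U}_Z\in\mathbb{R}^{(D-m)\times m}$ has orthonormal columns, $\boldsymbol{\Sigma}_Z\in\mathbb{R}^{m\times m}$ is non-negative diagonal, and $\mathbf{V}_Z\in\mathbb{R}^{m\times m}$ is orthogonal. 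Substituting gives $\mathbf{H} = (\mathbf{U}_\perp \mathbf{U}_Z)\boldsymbol{\Sigma}_Z \mathbf{V}_Z^T$, so I would define $\mathbf{\hat H} := \mathbf{U}_\perp \mathbf{U}_Z$ and $\boldsymbol{\Theta} := \boldsymbol{\Sigma}_Z$.

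Next I would verify the two properties promised by the statement. First, $\mathbf{\hat H}$ has orthonormal columns: since $\mathbf{U}_\perp^T\mathbf{U}_\perp = \mathbf{I}_{D-m}$ and $\mathbf{U}_Z^T\mathbf{U}_Z = \mathbf{I}_m$, one has $\mathbf{\hat H}^T\mathbf{\hat H} = \mathbf{U}_Z^T\mathbf{U}_\perp^T\mathbf{U}_\perp\mathbf{U}_Z = \mathbf{I}_m$; moreover $\mathbf{U}^T\mathbf{\hat H} = (\mathbf{U}^T\mathbf{U}_\perp)\mathbf{U}_Z = 0$, so $\mathbf{\hat H}$ still lies in the tangent space at $\mathbf{U}$. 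Second, $\boldsymbol{\Theta}$ is diagonal with non-negative entries (positive under the standard convention of dropping zero singular values, which is the case that matters for the analytic development in \cref{th:cos_sin_decompos}). Finally, the factor $\mathbf{V}_Z^T$ that is peeled off is precisely an element of the same orthogonal group $\mathrm{O}(m)$ that generates the equivalence relation \eqref{eq:equvi_grassmann}; under this ambiguity $\mathbf{H}$ and $\mathbf{H}\mathbf{V}_Z = \mathbf{\hat H}\boldsymbol{\Theta}$ are identified as tangent vectors of the same geodesic direction at the equivalent basis $\mathbf{U}\mathbf{V}_Z$, which is the meaning of ``equivalent expression'' consistent with the decomposition of \cref{th:cos_sin_decompos}.

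The only delicate point, and what I expect to be the main obstacle, is making precise that right multiplication by $\mathbf{V}_Z$ truly yields an \emph{equivalent} tangent vector rather than a different one. I would handle this by recalling that the Grassmann manifold is the quotient $V(m,D)/\mathrm{O}(m)$, so a tangent vector is represented by $\mathbf{H}$ only up to simultaneous replacement $\mathbf{U}\mapsto \mathbf{U}\mathbf{Q}$, $\mathbf{H}\mapsto \mathbf{H}\mathbf{Q}$; with this convention $\mathbf{H}$ and $\mathbf{H}\mathbf{V}_Z$ represent the same abstract tangent vector, so the rewriting $\mathbf{H} \equiv \mathbf{\hat H}\boldsymbol{\Theta}$ is canonical. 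Once this identification is stated, the remaining computation is routine SVD bookkeeping, and the lemma follows.
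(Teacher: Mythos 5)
Your argument is correct and is essentially the paper's own route: the paper also obtains $\mathbf{\hat H}\boldsymbol{\Theta}$ from the compact SVD of the tangent vector in \cref{eq:tangent_vec_svd} (your SVD of $\mathbf{Z}$ composed with $\mathbf{U}_\perp$ is exactly that SVD of $\mathbf{H}$) and absorbs the right orthogonal factor via the equivalence relation \cref{eq:equvi_grassmann}. Your explicit handling of the simultaneous rotation $\mathbf{U}\mapsto\mathbf{U}\mathbf{V}_Z$, $\mathbf{H}\mapsto\mathbf{H}\mathbf{V}_Z$ is precisely the sense of ``equivalent expression'' the paper intends, so no gap remains.
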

\begin{lemma}
\label{th:polar_coord}
Unitary tangent vector and distance can uniquely and compactly determine any subspace on Grassmann manifolds.
\end{lemma}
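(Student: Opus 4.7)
The plan is to combine Lemmas \ref{th:cos_sin_decompos} and \ref{th:quiv_tangent_vec} into a polar-type parameterization of the Grassmann manifold around a reference point $\mathbf{U}$, and then verify uniqueness and a matching degree-of-freedom count.

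First I would use Lemma \ref{th:cos_sin_decompos} to rewrite an arbitrary subspace $\mathbf{\tilde U}\in \mathcal{G}(m,D)$ in the form
\begin{align*}
\mathbf{\tilde U}\mathbf{\tilde Q}_m = \mathbf{U}\cos\boldsymbol{\Theta} + \mathbf{\hat H}\sin\boldsymbol{\Theta},
\end{align*}
and observe that, by the equivalence relation in \cref{eq:equvi_grassmann}, the right-hand side fixes $\mathrm{span}(\mathbf{\tilde U})$ independently of $\mathbf{\tilde Q}_m$. Hence a pair $(\mathbf{\hat H},\boldsymbol{\Theta})$ with $\mathbf{\hat H}^T\mathbf{\hat H} = \mathbf{I}_m$, $\mathbf{\hat H}^T\mathbf{U} = \mathbf{0}$, and diagonal $\boldsymbol{\Theta}\succeq 0$ produces a unique element of the Grassmann manifold. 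By Lemma \ref{th:quiv_tangent_vec}, the composite $\mathbf{\hat H}\boldsymbol{\Theta}$ is exactly a tangent vector at $\mathbf{U}$ in its compact SVD form, so $\mathbf{\hat H}$ plays the role of a unit direction and the diagonal entries of $\boldsymbol{\Theta}$ the radial distances along the principal angles; this gives the ``polar coordinate'' interpretation claimed by the lemma.

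For uniqueness I would argue in the reverse direction: given $\mathbf{\tilde U}$, the principal angles $\boldsymbol{\Theta}$ between $\mathrm{span}(\mathbf{U})$ and $\mathrm{span}(\mathbf{\tilde U})$ are intrinsic invariants of the pair of subspaces (they are SVD singular values of $\mathbf{U}^T\mathbf{\tilde U}$ composed with $\arccos$), so $\boldsymbol{\Theta}$ is determined up to ordering. Once $\boldsymbol{\Theta}$ is fixed, $\mathbf{\hat H}$ is recovered from \cref{eq:cos_sin_decompos} by solving $\mathbf{\hat H}\sin\boldsymbol{\Theta} = \mathbf{\tilde U}\mathbf{\tilde Q}_m - \mathbf{U}\cos\boldsymbol{\Theta}$, which fixes each column of $\mathbf{\hat H}$ uniquely whenever the corresponding entry of $\boldsymbol{\Theta}$ is nonzero; the remaining columns (where $\theta_l=0$, meaning $\mathbf{\tilde u}_l$ coincides with a direction in $\mathrm{span}(\mathbf{U})$) contribute nothing to the displacement and can be absorbed into the gauge freedom $\mathbf{\tilde Q}_m$, so they do not affect $\mathrm{span}(\mathbf{\tilde U})$. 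The natural restriction $\theta_l\in[0,\pi/2)$ coming from the truncation radius mentioned in the text then makes the representation unique on a dense open chart.

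For compactness I would simply count parameters: $\mathbf{\hat H}$ lies on the Stiefel-like set $\{\mathbf{H}\in \mathbb{R}^{D\times m} : \mathbf{H}^T\mathbf{H}=\mathbf{I}_m,\, \mathbf{H}^T\mathbf{U}=\mathbf{0}\}$, which has dimension $m(D-m) - \tfrac{m(m-1)}{2}$ (a Stiefel manifold in the orthogonal complement $\mathbf{U}_\perp \cong \mathbb{R}^{D-m}$), while $\boldsymbol{\Theta}$ contributes $m$ diagonal entries; modding out by the internal orthogonal action that permutes/rotates the principal directions (a group of dimension $\tfrac{m(m-1)}{2}$, generically a product of signs and permutations) leaves exactly $m(D-m)$ effective parameters, matching the dimension of $\mathcal{G}(m,D)$ stated in \cref{sec:back}. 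The main obstacle I expect is the case of repeated or zero principal angles, where $\mathbf{\hat H}$ is only determined up to a block-orthogonal rotation inside the corresponding eigenspace; I would handle this by working on the open dense stratum of distinct nonzero $\theta_l$ and noting that this is precisely the ``unique and compact'' coordinate system that the subsequent pseudo-Gaussian construction needs.
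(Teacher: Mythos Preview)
Your proposal is correct and follows the paper's own approach: the paper simply states that \cref{th:polar_coord} is ``naturally deducted from \cref{th:cos_sin_decompos,th:quiv_tangent_vec}'' and defers details to the supplementary material, which is exactly the combination you carry out. Your uniqueness argument via principal angles and your dimension count for ``compactness'' go well beyond the level of detail in the main text, but they are consistent with, and a natural elaboration of, the paper's one-line derivation.
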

\begin{proof}
The first two lemmas can be proved from the exponential mapping in \cref{eq:pi_U_H,eq:tangent_vec_svd} with the equivalent relationship defined in \cref{eq:equvi_grassmann}.
The third lemma is naturally deducted from \cref{th:cos_sin_decompos,th:quiv_tangent_vec}.
The detailed proofs of lemmas in the paper can be found in supplementary materials\footnote{The supplementary file is available at \url{http://jyhong.com/files/sigkdd_supp.pdf}.}.
\end{proof}
Therefore, a basis of a randomly disturbed subspace can be formulated as
\begin{align}
\mathbf{\tilde u} = \mathbf{u} \cos \theta + \mathbf{w}\sin \theta \label{eq:basis_decompos}
\end{align}
where $\mathbf{w} = \mathbf{U}_{\perp} \mathbf{x}$ such that $\norm{\mathbf{x}}=1$, and $\mathbf{u}$ is a column of a subspace $\mathbf{U}$.
The $\theta$ is one entry of $\boldsymbol{\Theta}$. %
In correspondence, the $\mathbf{w}$ presents the direction.

With the tangent expression, we introduce a \emph{pseudo-Gaussian distribution} on the Grassmann manifold.
The direction vector is subject to a uniform direction distribution, denoted as $f(\mathbf{w})$.
And the $\theta$ subject to a proper distribution whose form is not specified though.
Without loss of generalization, we write it as $f(\theta;\sigma)$ which is governed by the variance $\sigma$.
In \cref{th:tg_exp_coef}, we show how the expectation can be expressed in a simple form with a coefficient.
\begin{lemma}
\label{th:tg_exp_coef}
Assuming a disturbed subspace $\mathbf{\tilde U}$ subject to the pseudo-Gaussian distribution on Grassmann manifold, then the expectation involving an individual basis will be given by
\begin{align}
\Ebb [\mathbf{\tilde u} \mathbf{\tilde u}^T] = c(\sigma) \mathbf{u}\mathbf{u}^T + {1-c(\sigma) \over D-m} \mathbf{U}_{\perp} \mathbf{U}_{\perp}^T \label{eq:tg_exp_uuT}
\end{align}
where $\sigma$ denotes the level of variance and $c(\sigma)$ is a function of $\sigma$ dominated between $0$ and $1$.
\end{lemma}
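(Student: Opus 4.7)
The plan is to substitute the tangent expression \eqref{eq:basis_decompos} for a single disturbed basis directly into $\mathbf{\tilde u}\mathbf{\tilde u}^T$, expand the outer product, and exploit the independence of the angular variable $\theta$ and the direction vector $\mathbf{w}$ together with the spherical symmetry of $\mathbf{w}$ to collapse the cross term and identify the two rank-one blocks in \eqref{eq:tg_exp_uuT}.

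Concretely, I would first write
\begin{align*}
\mathbf{\tilde u}\mathbf{\tilde u}^T
= \cos^2\theta\,\mathbf{u}\mathbf{u}^T
+ \sin\theta\cos\theta\,(\mathbf{u}\mathbf{w}^T+\mathbf{w}\mathbf{u}^T)
+ \sin^2\theta\,\mathbf{w}\mathbf{w}^T,
\end{align*}
and then take expectations termwise. Since $\theta$ is independent of $\mathbf{w}$, the middle term factorizes as $\Ebb[\sin\theta\cos\theta]\,(\mathbf{u}\Ebb[\mathbf{w}]^T+\Ebb[\mathbf{w}]\mathbf{u}^T)$, and the cross term vanishes because the uniform direction distribution $f(\mathbf{w})$ on the unit sphere of the null space is antipodally symmetric, giving $\Ebb[\mathbf{w}]=\mathbf{0}$.

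Next I would evaluate $\Ebb[\mathbf{w}\mathbf{w}^T]$. Writing $\mathbf{w}=\mathbf{U}_\perp\mathbf{x}$ with $\mathbf{x}$ uniform on the unit sphere in $\mathbb{R}^{D-m}$, rotational invariance forces $\Ebb[\mathbf{x}\mathbf{x}^T]$ to be a scalar multiple of $\mathbf{I}_{D-m}$, and taking traces pins that scalar to $1/(D-m)$ since $\trace(\Ebb[\mathbf{x}\mathbf{x}^T])=\Ebb[\|\mathbf{x}\|^2]=1$. Hence $\Ebb[\mathbf{w}\mathbf{w}^T]=\frac{1}{D-m}\mathbf{U}_\perp\mathbf{U}_\perp^T$. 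Plugging everything back yields
\begin{align*}
\Ebb[\mathbf{\tilde u}\mathbf{\tilde u}^T]
= \Ebb[\cos^2\theta]\,\mathbf{u}\mathbf{u}^T
+ \frac{\Ebb[\sin^2\theta]}{D-m}\,\mathbf{U}_\perp\mathbf{U}_\perp^T.
\end{align*}
Setting $c(\sigma):=\Ebb[\cos^2\theta]$ under $f(\theta;\sigma)$ gives exactly \eqref{eq:tg_exp_uuT}, since $\cos^2\theta+\sin^2\theta=1$ implies $\Ebb[\sin^2\theta]=1-c(\sigma)$. Boundedness $c(\sigma)\in[0,1]$ is immediate from $\cos^2\theta\in[0,1]$.

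The only delicate point, and the part I would spell out most carefully in a formal write-up, is the justification that $\mathbf{w}$ may be treated as uniformly distributed on the unit sphere of $\mathrm{span}(\mathbf{U})^\perp$ independently of $\theta$ (so that the factorizations above are legitimate). This is exactly the content of the ``pseudo-Gaussian'' specification: the direction $\mathbf{w}$ carries the uniform surface measure $f(\mathbf{w})$ on that $(D-m-1)$-sphere and $\theta$ is drawn from $f(\theta;\sigma)$ independently. With that assumption in hand the calculation is a short exercise in second-moment manipulation; no heavy integration against $f(\theta;\sigma)$ is needed, because the full $\sigma$-dependence is absorbed into the single scalar $c(\sigma)$.
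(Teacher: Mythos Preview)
Your proof is correct and follows essentially the same approach as the paper's: substitute the tangent expression \eqref{eq:basis_decompos} into $\Ebb[\mathbf{\tilde u}\mathbf{\tilde u}^T]$, invoke the spherical integral $\Ebb[\mathbf{w}\mathbf{w}^T]=\frac{1}{D-m}\mathbf{U}_\perp\mathbf{U}_\perp^T$, and define $c(\sigma)=\Ebb[\cos^2\theta]$. Your write-up is in fact more explicit than the paper's, since you spell out the vanishing of the cross term via $\Ebb[\mathbf{w}]=\mathbf{0}$ and the identity $\Ebb[\sin^2\theta]=1-c(\sigma)$, both of which the paper leaves implicit.
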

\begin{proof}
Substituting the integrals
\begin{align}
\int \mathbf{w}\mathbf{w}^T f(\mathbf{w}) d\mathbf{w} &= \int_{\mathbf{x}\in\mathbb{R}^{D-m},\norm{\mathbf{x}}=1} \mathbf{U}_{\perp} \mathbf{x} \mathbf{x}^T \mathbf{U}_{\perp}^T f(\mathbf{x}) d\mathbf{x} \notag  \\
&= {1\over D-m}\mathbf{U}_{\perp}\mathbf{U}_{\perp}^T \notag  \\
c(\sigma) &\triangleq  \int \cos^2 \theta f(\theta; \sigma) d\theta \label{eq:c_sigma}
\end{align}
and \cref{eq:basis_decompos} into the expectation
yields the expected result.
\end{proof}

The exact distribution on the Grassmann manifold and the expectation, \cref{eq:c_sigma}, are nontrivial for precise calculation.
However, a well-guess is that the coefficient is a monotonically decreasing differential function of variance, because the original basis should be less influential when the disturbance is more significant.
Finding an approximate function is not tough by fitting the boundary cases, the smallest and largest variance.
Because the form of \cref{eq:c_sigma} is not determined, it is free to define the range of $\sigma$.
We let $0$ and $1$ be the lower and upper bound, respectively.

\begin{figure}[!ht]
    \centering
    \includegraphics[width=\columnwidth]{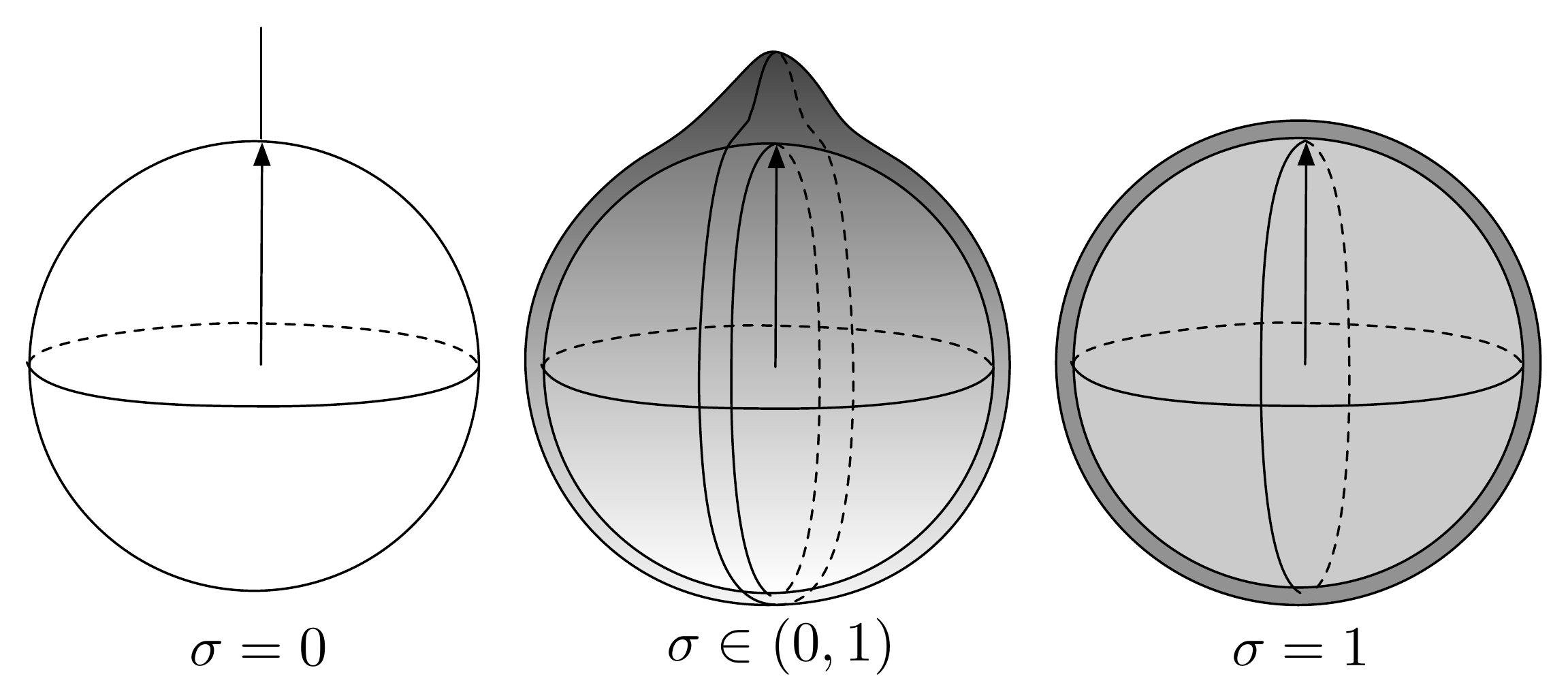}
    \caption{The pseudo-Gaussian distribution on a $2$-sphere for different variance $\sigma$. The arrow represents the mean basis. The darker color represents the higher probability density.}
    \label{fig:sphere_distribution}
\end{figure}

In \cref{fig:sphere_distribution}, we visualize the preferred distribution resembling a Gaussian distribution on a $2$-dimensional sphere.
When variance $\sigma$ is zero, the coefficient $c(\sigma)$ should be $1$.
While $\sigma$ is close to upper bound, the distribution should degenerate to an uniform distribution whose expectation should be
\begin{align*}
\Ebb [\mathbf{u} \mathbf{u}^T] = {1 \over D-m+1} \mathbf{u}\mathbf{u}^T + {1 \over D-m+1} \mathbf{U}_{\perp} \mathbf{U}_{\perp}^T.
\end{align*}
Hence, we can intuitively design the coefficient as
\begin{align*}
c(\sigma) = {1 \over \sigma^2 (D-m) + 1}.
\end{align*}
The following question is how the variance ought to be chosen.
Of course, a naive solution is uniformly disturbing all bases, whereas we recommend a singular-value-related variance
\begin{align}
\sigma^2 \propto {\epsilon^2 \over D}{1\over \lambda^2} \label{eq:variance_lambda}
\end{align}
governed by a parameter $\epsilon$.
From the practical standpoint, bases tend to be disturbed more easily when their singular values are small.
Theoretically, the variance expression rises up from the SVD perturbation theory under Gaussian noise \cite{wang2015singular} when the variance $\sigma$ is close to zero.
Because the approximation is a Euclidean assumption of the manifold corresponding to the tangent space and we have used precisely wrapping, the restriction for $\sigma$ is no longer essential.
To fit the variance in the range $[0,1]$, we let the square variance be a function of singular value as
\begin{align}
\sigma_\lambda^2 = 1 - \exp \left\{- {\epsilon \over D} \left({1\over \lambda} -1\right) \right\} \label{eq:sigma_lambda}
\end{align}
where $\lambda=1$ is correspond to the case when non-normalized singular value is approaching infinity.
Also the extremely small singular value $\lambda$ can suggest a boundary case of variance.
It is notable that in \cref{eq:sigma_lambda}, we remove the square operation in $\epsilon$ and $\lambda$ because it will be numerically easier for tuning $\epsilon$ without the square.

Let $\Delta = {m - \trace \Sigma \over D - m}$ and $\Sigma_{ii} = c(\sigma_{\lambda_i})$ and substitute the expectation in \cref{eq:tg_exp_uuT} into the DG kernel approaching 
\begin{align}
\kappa_{\mathrm{PG}}(\mathbf{U}_n, \mathbf{U}_{n'} ) &= \trace\left\{(\boldsymbol{\Sigma}_n-\Delta_n\mathbf{I}_m) \mathbf{U}_n^T \mathbf{U}_{n'} (\boldsymbol{\Sigma}_{n'}-\Delta_{n'}\mathbf{I}_m) \mathbf{U}_{n'}^T \mathbf{U}_{n}\right\} \notag \\
&\quad + \Delta_n \trace \boldsymbol{\Sigma}_{n'} + \Delta_{n'} \trace \boldsymbol{\Sigma}_n \notag \\
&\quad + \Delta_n \Delta_{n'} (D - 2m) \label{eq:PG_kernel}
\end{align}
which is named the DG Pseudo-Gaussian (DG-PG) kernel.
Its computation cost is up to $\mathcal{O}(m^2D)$.

\subsection{Dirichlet Disturbance}

We have assumed the singular values to be unchanged so far.
However, they will fluctuate when the amplitude of signal changes.
To incorporate such disturbances, we now temporarily let $\mathbf{u}_1, \dots, \mathbf{u}_r$ fixed.

The normalized singular values imply that $\sum_{l=1}^r \lambda_r = 1$.
Thus we stack the singular values in column vector $\boldsymbol{\lambda}$ and make the singular values following a Dirichlet distribution as
\begin{align*}
\boldsymbol{\tilde \lambda} \sim \mathrm{Dir}(\boldsymbol{\lambda})
\end{align*}
where the model parameter is set as the original singular values.

It is of our interest how probable a basis is retained in representation, or a singular value is larger than the threshold $\lambda_m$.
For the Dirichlet disturbance, we consider the marginal distribution of an individual singular value is a beta distribution. %
Its cumulative distribution function is, therefore, a regularized incomplete beta function $I_x(\lambda_l, 1 - \lambda_l)$.
Thus we can obtain
\begin{align}
P(\beta_l = 1) &= 1 - P(\lambda_l \le \lambda_m) \notag \\
&= I_{1 - \lambda_m}(1 - \lambda_l, \lambda_l) \triangleq p_l \label{eq:dropout_ra}
\end{align}
and the expectation of the projection matrix is equivalent to the projection matrix of a scaled subspace, i.e.,
\begin{align}
\Ebb[ \mathbf{\tilde U} \mathbf{\tilde U}^T ] = \sum_{l=1}^m \Ebb_{\beta_l}[ \beta_l \mathbf{u}_l \mathbf{u}_l^T ] = \sum_{l=1}^m p_l \mathbf{u}_l \mathbf{u}_l^T. \label{eq:proj_mat_dir}
\end{align}

Define a diagonal matrix $\boldsymbol{\Sigma}$ with $p_l$ in entries.
Then the disturbance kernel is a DG Dirichlet (DG-Dir) kernel, i.e.,
\begin{align}
\kappa_{\mathrm{Dir}}(\mathbf{U}_{n}, \mathbf{U}_{n'}) = \trace ( \boldsymbol{\Sigma}_{n} \mathbf{U}_{n}^T \mathbf{U}_{n'}\boldsymbol{\Sigma}_{n'}\mathbf{U}_{n'}^T \mathbf{U}_{n} ) \label{eq:dir_kernel}
\end{align}
whose time complexity is close to the original Projection kernel.

\subsection{Summary of Disturbance Grassmann Kernels}

In our derived DG kernels, the singular values play an important role in the coefficient of subspaces.
They can be written uniformly as
\begin{align*}
\kappa_{\mathrm{DG}}(\mathbf{U}_{n}, \mathbf{U}_{n'}) = \trace ( \mathbf{D}_{n} \mathbf{U}_{n}^T \mathbf{U}_{n'}\mathbf{D}_{n'}\mathbf{U}_{n'}^T \mathbf{U}_{n} ) + R
\end{align*}
where $\mathbf{D}$ denotes a positive diagonal matrix and $R$ is the residual term.
Interestingly, the scaled kernel proposed in \cite{hamm2009extended} has the identical form with $R=0$ and the entries of $\mathbf{D}$ are singular values.
They study the probabilistic manner of constructing subspace which is similar to the SVD and deduct the probability distance leading to extended Projection kernels.
Its success motivates us to think about the intrinsic statistic problems behind the Projection kernel.
Different from them, we emphasize the probability of disturbances in the SVD operation, which has more tight links to the data.

\section{Data Noise}

Until now, our discussions are centered on the noise of Grassmann points, lacking necessary links to data noise.
While in this section, we will analyze the origin of the discussed disturbances, to verify the firm connection between DG kernels and data noise.

\subsection{Singular-Vector Perturbation}

An asymptotic perturbation theory was previously proposed in \cite{bura2008distribution}.
It supposes that given any low-rank data matrix, as the perturbation converges to zero, a properly scaled version of its singular vectors corresponding to the nonzero singular values will converge in probability to a multivariate normal matrix.
Notably, there is no exact constraint on the type of minor noise in data.
As a consequence, approximated Gaussian noise in subspace matrixes can be a result of universal data noise, as long as it is fairly insignificant.

To reveal the form of approximated Gaussian noise, we start from the Gaussian perturbation in data.
Suppose $\mathbf{X}$ is a $D\times N$ data matrix, and $\mathbf{\tilde X} = \mathbf{X} + \epsilon_X \mathbf{W}$ is its noisy version controlled by a small constant, where the entries of $\mathbf{W}$ are independently and identically distributed.
For a given $m < D$, $\mathbf{X}$ has the singular-value decomposition $\mathbf{X} = \mathbf{U}\boldsymbol{\Sigma}\mathbf{V}^T$,
where $\boldsymbol{\Sigma}$ is a $m\times m$ singular-value diagonal matrix, and all singular values denoted by $\lambda_i$, $i=1,\dots, m$ are in descending order.
The noise leads to a non-asymptotic result with a closed-form of approximation error.
For the concern of this paper, we restate the results in Wang's paper \cite{wang2015singular} as \cref{th:gauss_noise}.

\begin{lemma}
\label{th:gauss_noise}
Given noised data matrix, i.e. $\mathbf{\tilde X} = \mathbf{X} + \epsilon_X \mathbf{W}$ and $W_{ij}\sim\mathcal{N}(0,1/D)$, the resultant subspace involves a Gaussian noise on the null space of $\mathbf{U}$ denoted by $\mathbf{U}_{\perp}$.
When $\epsilon_X$ approaches zero, the disturbed subspace can be written as
\begin{align}
\mathbf{\tilde U} = \mathbf{U} + \epsilon_X \mathbf{U}_{\perp} \mathbf{W}_0 \boldsymbol{\Sigma}^{-1}. \label{eq:U_gauss_perturbation}
\end{align}
where $\mathbf{W}_0$ is subject to the same Gaussian distribution.
\end{lemma}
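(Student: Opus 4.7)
The plan is to carry out a first-order perturbation expansion of the SVD and then read off the orthogonal-complement component of the perturbation. Writing $\mathbf{\tilde X} = \mathbf{X} + \epsilon_X \mathbf{W}$ with $\mathbf{X} = \mathbf{U}\boldsymbol{\Sigma}\mathbf{V}^T$, I expand the disturbed factors as $\mathbf{\tilde U} = \mathbf{U} + \epsilon_X\,\delta\mathbf{U} + \mathcal{O}(\epsilon_X^2)$, $\mathbf{\tilde V} = \mathbf{V} + \epsilon_X\,\delta\mathbf{V} + \mathcal{O}(\epsilon_X^2)$, $\mathbf{\tilde \Sigma} = \boldsymbol{\Sigma} + \epsilon_X\,\delta\boldsymbol{\Sigma} + \mathcal{O}(\epsilon_X^2)$, substitute into the defining identity $\mathbf{\tilde X}\mathbf{\tilde V} = \mathbf{\tilde U}\mathbf{\tilde \Sigma}$, and collect the $\epsilon_X^1$ terms, obtaining the linear relation
\[ \mathbf{X}\,\delta\mathbf{V} + \mathbf{W}\mathbf{V} \;=\; \delta\mathbf{U}\,\boldsymbol{\Sigma} + \mathbf{U}\,\delta\boldsymbol{\Sigma}. \]

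Next I would project onto the orthogonal complement of $\mathbf{U}$. Multiplying the above on the left by $\mathbf{U}_\perp^T$, the identities $\mathbf{U}_\perp^T \mathbf{X} = \mathbf{U}_\perp^T\mathbf{U}\boldsymbol{\Sigma}\mathbf{V}^T = 0$ and $\mathbf{U}_\perp^T\mathbf{U} = 0$ eliminate the $\delta\mathbf{V}$ and $\delta\boldsymbol{\Sigma}$ contributions, leaving $\mathbf{U}_\perp^T\mathbf{W}\mathbf{V} = (\mathbf{U}_\perp^T\,\delta\mathbf{U})\,\boldsymbol{\Sigma}$. Because all retained singular values are nonzero, $\boldsymbol{\Sigma}$ is invertible, and the null-space component of $\delta\mathbf{U}$ is uniquely determined:
\[ \mathbf{U}_\perp \mathbf{U}_\perp^T\,\delta\mathbf{U} \;=\; \mathbf{U}_\perp \bigl(\mathbf{U}_\perp^T \mathbf{W} \mathbf{V}\bigr) \boldsymbol{\Sigma}^{-1}. \]
The remaining in-span component $\mathbf{U}\mathbf{U}^T\,\delta\mathbf{U}$ corresponds to an infinitesimal rotation within $\mathrm{span}(\mathbf{U})$, and by the Grassmann equivalence \cref{eq:equvi_grassmann} it can be absorbed by post-multiplying $\mathbf{\tilde U}$ with a suitable $\mathbf{Q}_m$, hence it is irrelevant for the disturbed subspace and may be dropped when choosing a basis representative.

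Finally I would define $\mathbf{W}_0 \triangleq \mathbf{U}_\perp^T \mathbf{W} \mathbf{V}$. Since $\mathbf{W}$ has i.i.d.\ $\mathcal{N}(0, 1/D)$ entries and both $\mathbf{U}_\perp \in \mathbb{R}^{D\times(D-m)}$ and $\mathbf{V} \in \mathbb{R}^{N\times m}$ have orthonormal columns, rotational invariance of the isotropic Gaussian ensemble implies that $\mathbf{W}_0$ is a $(D-m)\times m$ matrix with the same i.i.d.\ $\mathcal{N}(0, 1/D)$ entries. Assembling the pieces yields $\mathbf{\tilde U} = \mathbf{U} + \epsilon_X\,\mathbf{U}_\perp \mathbf{W}_0 \boldsymbol{\Sigma}^{-1}$ up to $\mathcal{O}(\epsilon_X^2)$, which is the claimed formula as $\epsilon_X \to 0$.

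The main technical obstacle I expect is making the asymptotic regime rigorous rather than formal. Smoothness of the SVD branch near $\mathbf{X}$ requires a spectral gap, so that one can select the disturbed singular vectors continuously in $\epsilon_X$; and the orthonormality constraint $\mathbf{\tilde U}^T \mathbf{\tilde U} = \mathbf{I}_m$ must be checked to first order, which reduces to $\mathbf{U}^T\,\delta\mathbf{U} + (\delta\mathbf{U})^T \mathbf{U} = 0$ and is automatic once the in-span part is absorbed. Quantifying the $\mathcal{O}(\epsilon_X^2)$ residual cleanly, e.g.\ via Wedin's $\sin\Theta$ bound together with probabilistic control on the smallest retained singular value, is precisely the work carried out in \cite{wang2015singular}, which we may invoke rather than reprove.
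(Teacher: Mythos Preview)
Your proposal is correct and is essentially the approach the paper takes: the in-text proof simply invokes the singular-vector perturbation result of \cite{wang2015singular} and defers details to the supplement, and your first-order SVD expansion followed by projection onto $\mathbf{U}_\perp$ is exactly how that result is obtained. Your identification $\mathbf{W}_0 = \mathbf{U}_\perp^T \mathbf{W} \mathbf{V}$ via rotational invariance of the Gaussian ensemble and your handling of the in-span component through the Grassmann equivalence \cref{eq:equvi_grassmann} are precisely the right moves.
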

\begin{proof}
The lemma can be proved according to the theorem of singular-vector perturbation under Gaussian noise \cite{wang2015singular}.
For brevity, we enclose the detailed proof in supplementary materials.
\end{proof}

Roughly speaking, to ensure the approximation error has the lowest degree, we let the $\epsilon$ lower than any of $D^{-\beta-1/2}$, $D^{-3/4}$, and $ \norm{\mathbf{U}}_{\mathrm{max}}^{-1} D^{-1}$, %
where $0 < \beta < 1/2$ controls the probability of variance of Gaussian approximation increasingly. 
In our implementation, we approximately let $\epsilon \le D^{-1/2}$.
Therefore, we get the relation between variance and the singular values in \cref{eq:variance_lambda}.

The above theories strengthen the connection between the pseudo-Gaussian disturbance and the data noise, especially the Gaussian noise.
Numerically, the analysis gives tips about the tendency of the variance taking account of the singular values.

\subsection{Variant Subspace Representations}
\label{ssec:vsr}
Given action videos, subspace representations vary mainly due to action variance.
We define these subspace representations as points $\mathbf{U}_1, \dots, \mathbf{U}_M$ on corresponding Grassmann manifolds $\mathcal{G}(m_1, D)$, $\dots$, $\mathcal{G}(m_M, D)$ where $m_1$ to $m_M$ are natural numbers smaller than the maximum order $m$.
These different subspaces reveal different variance of action videos, including adding and missing principal postures.
Naive selection of one subspace for representation will lead to a poor generalization of classifiers.

To avoid missing useful features or incorporating unwanted features, we propose learning classifiers with all possible representations on variant Grassmann manifolds.
Especially, these representations are obtained from training data, which do not require extra data.
We first learn a subspace of full rank as $\mathbf{U}$ on $\mathcal{G}(m, D)$.
For the learned subspace, a subset of bases is utilized.
Specifically we define $\boldsymbol{\beta}\in \{0,1\}^m$ where entry $0$ denotes the deletion of a basis.
Thus representation data could be augmented on variant Grassmann manifolds as
$ \{ \phi(\mathbf{U}_n; \mathcal{G}_{\mathbf{1}}) \}_{n=1}^N \rightarrow \{ \{ \phi(\mathbf{U}_n; \mathcal{G}_{\boldsymbol{\beta}_i}) \}_{k=1}^K \}_{n=1}^N $ %
where $K$ additive representations are assigned for each training instance.

Thus training with augmented subspaces is corresponding to dropout of bases with $\boldsymbol{\beta}$.
We note that from \cref{eq:proj_mat_dir}, the distribution can also be viewed as a special case of Bernoulli distribution.
It, therefore, corresponds to a dropout noise on bases where $0$ entry denotes the deletion of basis.
From this perspective, the Dirichlet disturbance is a reflection of learning with variant subspace representation.

\section{Experiments}

\begin{table*}[!t]
\centering
\caption{Summary of data sets.}
\small
\begin{tabular}{|p{1.7cm}|p{6cm}|p{4cm}|p{4cm}|}
\hline
Data set & Classes & Total number of actions & Experimental protocol \\
\hline \hline
KARD \cite{data:kard:gaglio2015human} & Ten gestures (horizontal arm wave, high arm wave, two-hand wave, high throw, draw X, draw tick, forward kick, side kick, bend, and hand clap) and eight actions (catch cap, toss paper, take umbrella, walk, phone call, drink, sit down, and stand up) & 10 subjects $\times$ 18 actions $\times$ 3 try = 540 actions & Experiment A: $1/3$ training/ $2/3$ testing per subject; Experiment B: $2/3$ training/ $1/3$ testing per subject; Experiment C: $1/2$ training/ $1/2$ testing. \\
\hline
UCFKinect \cite{data:UCFKinect:ellis2013exploring} & balance, climb up, climb ladder, duck, hop, vault, leap, run, kick, punch, twist left, twist right, step forward, step back, step left, step right & 16 subjects $\times$ 16 actions $\times$ 5 try = 1280 actions & 4-fold cross-validation \\
\hline
\end{tabular}
\label{tbl:act_dataset}
\end{table*}

\begin{table}[!h]
\centering
\caption{Parameter ranges for models.}
\begin{tabular}{|l|c|}
\hline
 Parameter & Range \\ \hline \hline
 $C$ & $\{10^{-4},10^{-3},\dots, 10^{5}\}$ \\ \hline
 $r$ & $\{1,2,\cdots,15\}$ \\ \hline
 $\epsilon$ & $\{10^{-6}, 10^{-2}, 0.05, 0.1, 0.2, \cdots, 1, 1.2, 1.7, 2, 5, 40\}$ \\ \hline
 $\lambda_m$ & $\{0.001, 0.01, 0.1, 0.3, 0.5, 0.7, 0.9\}$ \\ \hline
\end{tabular}
\label{tbl:param}
\end{table}

On real data in \cref{tbl:act_dataset}, we evaluate the proposed kernels, DG-PG kernels and DG-Dir kernels with the renowned Support Vector Machines and compare them to different kernels.
The parameter ranges for different models are listed in \cref{tbl:param}.
Cross-validation over the range of parameters is applied for parameter selection.

The first two parameters, $C$ and $r$ are the parameters for all subspace-based SVMs.
Typically, $C$ is the soft margin parameter of SVMs.
And, we tune the assumed uniform rank $r$ for all subspaces, since we have no idea what is the real rank of the data.
As a result, there will be two steps of truncation in \cref{eq:U_beta}, controlled by $\lambda_m$ and $r$.
For kernels other than the DG kernels, $\lambda_m$ (or $m$) and $r$ is same.

Selecting the parameter $\lambda_m$ of DG-Dir kernel is quite straight.
We scan from $0$ to $1$ for the best parameter.
In real experiments, we find that a sparse parameter set as showed in \cref{tbl:param} usually has a better performance with cross-validation.
It might be ascribed to that the size of data sets is not large enough.

As for DG-PG kernel, the parameter $\epsilon$ governing the noise is not simple.
Because the variance is a designed function of singular values, it has a non-trivial relation to the variance of the tangent noise.
For this reason, we empirically choose some parameters to make the curves of the function of singular value as distinct as possible.

As a comparison, we run SVMs with other Grassmann kernels, including the original Projection (Proj) kernel, the extended (scaled/affine/affine scale) Projection (ScProj/AffProj/AffScProj) kernels, \cite{hamm2008grassmann}, and the Binet-Cauchy (BC) kernel \cite{smola2005binet}.
Moreover, advanced discriminative algorithms on the Grassmann manifold, e.g., Projection Metric Learning (PML) \cite{huang2015projection}, Grassmann Discriminant Analysis (GDA) \cite{hamm2008grassmann}, Graph embedding Grassmann Discriminant Analysis (GGDA) \cite{harandi2011graph} are included.
These methods are typical dimensionality reduction methods on Grassmann manifolds.
Technically, PML and GGDA are exercised using public Matlab codes from the source papers.
The refined features are classified using SVMs.

\subsection{Activity Recognition}
We perform activity recognition experiments with different difficulties on Kinect Activity Recognition (KARD) data set, to assess classifiers fully.

\textbf{Data set} %
Here it is of our interest to use the provided $15$ joints of the skeleton in world coordinates.
The skeleton features contain $45$ continuous real values.

\begin{table}[!h]
\centering
\small
\caption{The KARD data set is divided into three subsets presenting increasing difficulty for recognition.}
\begin{tabular}{*{3}{|c}|}
\hline
Set 1 & Set 2 & Set 3 \\
\hline \hline
Horizontal arm wave    &  High arm wave     & Draw tick \\ \hline
Two-hand wave          &  Side kick         & Drink \\ \hline
Bend                   &  Catch cap         & Sit down \\ \hline
Phone call             &  Draw tick         & Phone call \\ \hline
Stand up               &  Hand clap         & Take umbrella \\ \hline
Forward kick           &  Forward kick      & Toss paper \\ \hline
Draw X                 &  Bend              & High throw \\ \hline
Walk                   &  Sit down          & Horizontal arm wave \\
\hline
\end{tabular}
\label{tbl:KARD_set}
\end{table}

\textbf{Setup} In \cite{data:kard:gaglio2015human}, Gaglio et al. proposed some evaluation experiments on the data set. 
We follow their configuration applying three different experiments (A/B/C). 
And according to the visual similarity between actions or gestures, these activities are separated into three sets of difficulties in ascending order.
As shown in \cref{tbl:KARD_set}, activity set 1 is the simplest one since it is composed of quite different activities while the other two sets include more similar actions and gestures.
And generalization ability of predictors is assessed according to classification error rates in test sets.

\textbf{Result}
We report the classification error rates in \cref{tbl:KARD_class_all}.
Each error rate is an averaged over ten random partitions of data set following different experimental settings.

\subsection{Appended Noise Activities}
Considering the heterogeneousness of action data, frames from unrelated source would lead to the activity videos represented by totally different subspaces.
We specify the case in our experiment by artificially incorporating noise action or gestures in activity data, named Appended Noise Activities (ANA) experiment.

\begin{figure}
\includegraphics[width=0.9\columnwidth]{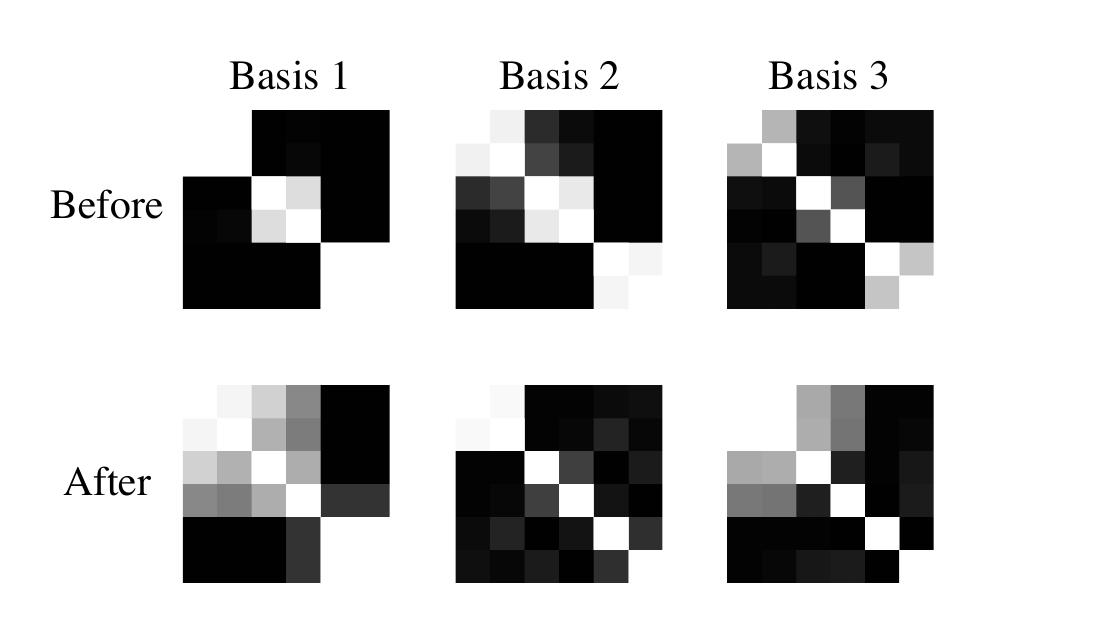} %
\caption{Similarity matrixes between actions measured by the Projection kernel of different bases before/after appending noise activities. Three class and two actions per class are tested. For each similarity matrix, instances are ordered by class and the darker color means the pair is less similar.  . %
}
\label{fig:ana_KARD_simmat}
\end{figure}

\begin{table*}[t]
\centering
\caption{Classification error rates on KARD data set.}
\small
\begin{tabular}{|l*{9}{|c}|}
\hline
EXP     & 1A            & 1B            & 1C            & 2A            & 2B            & 2C            & 3A            & 3B            & 3C            \\ \hline \hline
DG-PG  & 0.75          & 0.50          & \textbf{0.67} & 1.13          & 0.87          & 1.33          & 2.69          & 1.00          & 1.17          \\ \hline
DG-Dir  & \textbf{0.69} & \textbf{0.12} & 0.83          & \textbf{1.06} & \textbf{0.25} & \textbf{0.25} & \textbf{2.31} & \textbf{0.37} & \textbf{1.00} \\ \hline \hline
Proj     & 2.25          & 0.37          & 0.92          & 1.12          & 1.00          & 1.92          & 5.75          & 2.38          & 4.17          \\ \hline
ScProj & 2.50          & 0.75          & 1.67          & 1.31          & 0.50          & 1.08          & 4.62          & 1.00          & 2.75          \\ \hline
AffProj  & 7.69  & 2.00  & 4.67  & 3.44  & 1.00  & 3.25  & 7.25  & 3.37  & 6.00  \\ \hline
AffScProj &  9.50 &  4.62 &  6.50 &  5.50 &  3.75 &  5.50 &  9.75 &  5.12 &  8.50 \\ \hline
BC  & 6.25          & 3.50          & 4.83          & 5.50          & 2.88          & 3.92          & 9.88          & 4.00          & 6.33          \\ \hline
PML & 4.56  & 2.12  & 2.67  & 1.44  & 0.62  & 1.83  & 4.69  & 3.37  & 3.92  \\ \hline
GDA  &  9.44  &  4.13  &  7.00  &  3.81  &  1.75  &  2.42  & 15.25  &  7.25  & 11.42 \\ \hline
GGDA &  6.44 &  4.87 &  5.67 &  1.75 &  0.50 &  2.17 & 12.00 &  6.50 & 10.75 \\ \hline
\end{tabular}
\label{tbl:KARD_class_all}
\end{table*}

\begin{table*}[t]
\centering
\caption{Classification error rates on KARD data set with appended noise activities.}
\small
\begin{tabular}{|l*{9}{|c}|}
\hline
EXP     & 1A             & 1B            & 1C            & 2A            & 2B            & 2C            & 3A            & 3B            & 3C            \\ \hline \hline
DG-PG  & \textbf{11.25} & 8.25          & \textbf{9.08} & \textbf{4.62} & \textbf{1.62} & \textbf{2.50} & \textbf{7.06} & \textbf{3.87} & \textbf{4.92} \\ \hline
DG-Dir  & 12.56          & \textbf{7.38} & 9.92          & 5.31          & 1.88          & 3.08         & 10.12         & 4.38          & 5.83          \\ \hline \hline
Proj     & 12.06          & 9.75          & 9.58          & 7.44          & 3.00          & 4.00          & 12.13         & 6.62          & 9.33          \\ \hline
ScProj & 12.94          & 7.50          & 12.33         & 6.94          & 2.37          & 5.67          & 17.25         & 8.00          & 12.83         \\ \hline
AffProj  & 16.31 &  9.25 & 10.33 &  7.50 &  3.12 &  6.33 & 14.56 &  7.50 &  9.92 \\ \hline
AffScProj & 18.81 & 12.88 & 14.25 & 21.00 &  8.62 & 15.33 & 32.63 & 19.50 & 26.25 \\ \hline
BC  & 19.19          & 13.88         & 18.17         & 15.31         & 10.62         & 15.25         & 23.38         & 16.12         & 20.42         \\ \hline
PML  & 13.62  & 10.25  & 11.00  &  6.19  &  3.38  &  4.25  & 14.56  &  8.00  & 13.00  \\ \hline
GDA  &  13.94  &  10.38  &  10.50  &   8.06  &   2.50  &   4.42  &  21.12  &  11.12  &  15.58  \\ \hline
GGDA & 12.25   &  9.12   & 10.92   &  5.94   &  2.25   &  4.08   & 17.69   & 10.00   & 13.67 \\ \hline
\end{tabular}
\label{tbl:KARD:ana}
\end{table*}

\textbf{Setup}
For every experiment set defined in \cref{tbl:KARD_set}, we choose a noise class excluded from the set.
Then an activity sequence randomly picked from the noise class is appended to a sample sequence in the experiment set.
Such operation will significantly alter the singular vectors and values.
Because the noise action sequence is from the same class which is not distinguishable, the discriminative ability of each basis will be influenced meanwhile.
For illustration, we depict the similarity matrixes before and after adding noise activities in \cref{fig:ana_KARD_simmat}.
The three bases are discriminative as showed initially (the first row), while the first and third bases are not discriminative for class one and two after appending noise.

\textbf{Result}
\cref{tbl:KARD:ana} shows the classification error rates when noise activities are involved.
Still, the error rate is averaged over ten training/test partitions.

\subsection{Action Recognition in Low Latency}
In this experiment, we consider the challenge of action recognition in low latency, using the UCFKinect data set.
It is motivated by a daily requirement for action recognition, in which a rough estimation is required to be given before video recording is accomplished.
Except the technical necessity, a quick response of recognition system is in favor of better user experience.

\textbf{Data set} 
The skeletal joint locations over sequences of this data set are estimated using the Microsoft Kinect sensor and the PrimeSense NiTE.
The coordinates of $15$ joints are used for classification.
During recording, subjects are advised to start from and end at an identical resting posture for all actions.
The recording was manually stopped upon completion of the action.

\textbf{Setup }
The latency of an action is defined as the duration between the time a user begins the action and the time the classifier classifies the action.
To evaluate classifiers on different latency, the experiment in \cite{data:UCFKinect:ellis2013exploring} is introduced here.
The maximum video length $K$ is set to force all sequence truncated at the $K$-th frames.
Full videos are kept if shorter than $K$.

\begin{table}[!ht]
\centering
\small
\caption{Classification error rates on varying maximum video length.}
\begin{tabular}{|l*{7}{|c}|}
\hline
Frames  & 10             & 15             & 20             & 25            & 30            & 40            & 60            \\ \hline \hline
DG-PG  & 71.56          & \textbf{42.50} & 17.81          & \textbf{6.87} & \textbf{2.81} & 1.25          & \textbf{0.62} \\ \hline
DG-Dir  & \textbf{70.31} & 43.44          & \textbf{16.56} & \textbf{6.87} & \textbf{2.81} & \textbf{0.62} & \textbf{0.62} \\ \hline \hline
Proj     & 75.94          & 46.88          & 20.94          & 9.38          & 3.12          & 1.56          & 1.88          \\ \hline
ScProj & 73.12          & 45.63          & 17.19          & 7.81          & 3.12          & 1.56          & \textbf{0.62} \\ \hline
AffProj &  73.75 & 51.88 & 25.94 & 13.75 &  5.63 &  2.19 &  0.94  \\ \hline
AffScProj &  72.19 & 50.94 & 24.38 & 15.31 &  6.25 &  3.75 &  1.94  \\ \hline
BC  & 75.94          & 50.62          & 21.56          & 11.25         & 6.87          & 3.75          & 2.81          \\ \hline
PML & 73.44 & 48.12 & 19.06 & 10.31 &  3.44 &  1.88 &  1.25  \\ \hline
GDA & 80.94  & 56.88  & 22.50  & 13.44  &  6.87  &  2.19  &  2.81   \\ \hline
GGDA & 78.75  & 70.00  & 31.56  & 15.00  &  4.37  &  1.88  &  1.25  \\ \hline
LTB  & 78.13          & 50.63          & 25.63          & 13.13         & 7.92          & 2.71          & 2.09          \\ \hline \hline
LAL    & 86.09          & 63.05          & 35.23          & 18.44         & 9.45          & 4.84          & 4.06          \\ \hline 
\end{tabular}
\label{tbl:latency_UCFKinect}
\end{table}

\textbf{Result } We summarize the average classification error rate versus maximum video length in \cref{tbl:latency_UCFKinect}.
Some selected old results on the data sets are also reported in the table, where the Local Tangent Bundle (LTB) \cite{data:MSRAct:slama2015accurate} kernel had the best performance on subspaces before and the Latency Aware Learning (LAL) \cite{data:UCFKinect:ellis2013exploring} is the best non-Grassmann methods.
The lengths of frames to show the results are selected according to previous works \cite{data:UCFKinect:ellis2013exploring}.
When the length is longer than $30$, the difference of error rates will be less distinct.
Therefore the step of length is smaller after $30$.

\subsection{Discussion}
In the three data sets, our proposed kernels have shown superior performance compared to previous methods.
Specifically, DG-Dir kernels perform best in KARD data set and in low latency, while DG-PG kernels do better in the ANA experiment.

For different difficulties of the KARD sets, the error rates of DG-Dir kernels is stably lower than $3\%$ and better than those of the ScProj kernel, which use the fixed singular values as subspace coefficients.
We attribute the superiority to the more flexible relation between the coefficients and the singular values.

The low-latency challenge can be treated as a basis-missing problem for subspace-based learning.
As analyzed in \cref{ssec:vsr}, the case corresponds to a dropout noise of bases, which has been considered in the DG-Dir kernel.
The similarity between DG-Dir kernels and ScProj kernels is verified again in the experimental results.
The low-latency also has an impact on the singular vectors, which explains the good performance of DG-PG kernels.

The impact is more obvious in the ANA experiment.
Consequently, we observe a superiority of DG-PG kernels in the case.
The gaps between the DG-PG kernels and baseline methods are significant, especially in the set 2 and 3.

\section{Conclusion}
In this work, we novelly present the Disturbance Grassmann kernel concerning the potential disturbances for subspace-based learning.
By investigating the disturbances in subspaces, we extend the Projection kernel to two new kernels corresponding to the Gaussian-like noise in subspace matrixes and the Dirichlet noise in singular values.
We also prove the subspace disturbances are linked to data noise in specific cases.
With action data, we demonstrate that the resultant kernels can outperform the original Projection kernel as well as previously used extended Projection kernels, Binet-Cauchy kernels for subspace-based learning problems. %
Results with noisy environment also highlight the superiorities of the DG kernel.
The DG kernel is not confined to the Projection kernel and action data.
Extensions to other kernels has not been explored yet, and we look forward to a follow-up study in the future.
Besides, treating the subspace representation in model space, we expect to further exploit methods with metric co-learning \cite{chen2015model} and apply them to other sequential data \cite{li2016sequential}.

\begin{acks}
The authors would like to thank Prof. Peter Ti{\v{n}}o and Mr. Yang Li for their valuable and constructive suggestions during the planning of this research work.

This research is supported in part by the \grantsponsor{GS01}{National Key Research and Development Program of China}{} under Grant No.~\grantnum{GS01}{2017YFC0804003}, the \grantsponsor{GS02}{National Natural Science Foundation of China}{} under  Grant No.:~\grantnum{GS02}{91546116} and~\grantnum{GS02}{91746209}.
\end{acks}

\bibliographystyle{ACM-Reference-Format}
\balance
\bibliography{refs}

\end{document}